%% file: root.tex
\documentclass[letterpaper, 10 pt, conference]{ieeeconf}  % Comment this line out if you need a4paper

\IEEEoverridecommandlockouts                              % This command is only needed if 
\input{preamble}

\makeatother
\DeclareCaptionFont{mysize}{\fontsize{8}{9.6}\selectfont}
\title{\LARGE \bf Backup-Based Safety Filters: A Comparative Review of Backup CBF, Model Predictive Shielding, and \gatekeeper{}}

\author{Taekyung Kim$^{1}$, Aswin D. Menon$^{2}$, Akshunn Trivedi$^{1}$, Dimitra Panagou$^{1,2}$% <-this % stops a space
\thanks{This research was supported in part by the National Science Foundation (NSF) under Award Number 1942907.} %
\thanks{$^{1}$Department of Robotics, $^{2}$Department of Aerospace Engineering, University of Michigan, Ann Arbor, MI, 48109, USA {\tt\footnotesize \{taekyung, admenon, akshunn, dpanagou\}@umich.edu} } %
}

\begin{document}
\maketitle
\thispagestyle{empty}
\pagestyle{empty}

%%%%%%%%%%%%%%%%%%%%%%%%%%%%%%%%%%%%%%%%%%%%%%%%%%%%%%%%%%%%%%%%%%%%%%%%%%%%%%%%
% text friendly version:
%This paper revisits three backup-based safety filters—Backup Control Barrier Functions (Backup CBF), Model Predictive Shielding (MPS), and gatekeeper—through a unified comparative framework. Using a common safety-filter abstraction and shared notation, we make explicit both their common backup-policy structure and their key algorithmic differences. We compare the three methods through their filter-inactive sets, i.e., the states where the nominal policy is left unchanged. In particular, we show that MPS is a special case of gatekeeper, and we further relate gatekeeper to the interior of the Backup CBF inactive set within the implicit safe set. This unified view also highlights a key source of conservatism in backup-based safety filters: safety is often evaluated through the feasibility of a backup maneuver, rather than through the nominal policy's continued safe execution. The paper is intended as a compact tutorial and review that clarifies the theoretical connections and differences among these methods.

\begin{abstract}
This paper revisits three backup-based safety filters---Backup Control Barrier Functions (Backup CBF), Model Predictive Shielding (MPS), and \gatekeeper{}---through a unified comparative framework. Using a common safety-filter abstraction and shared notation, we make explicit both their common backup-policy structure and their key algorithmic differences. We compare the three methods through their filter-inactive sets, i.e., the states where the nominal policy is left unchanged. In particular, we show that MPS is a special case of \gatekeeper{}, and we further relate \gatekeeper{} to the interior of the Backup CBF inactive set within the implicit safe set. This unified view also highlights a key source of conservatism in backup-based safety filters: safety is often evaluated through the feasibility of a backup maneuver, rather than through the nominal policy's continued safe execution. The paper is intended as a compact tutorial and review that clarifies the theoretical connections and differences among these methods. \href{https://www.taekyung.me/backup-safety-filters}{\textcolor{red}{[Project Page]}}\footnote{Project page: \href{https://www.taekyung.me/backup-safety-filters}{https://www.taekyung.me/backup-safety-filters}} \href{https://github.com/tkkim-robot/safe_control/tree/main/shielding}{\textcolor{red}{[Code]}}
\end{abstract}
%%%%%%%%%%%%%%%%%%%%%%%%%%%%%%%%%%%%%%%%%%%%%%%%%%%%%%%%%%%%%%%%%%%%%%%%%%%%%%%%

\section{INTRODUCTION}

\input{_I.Introduction/intro}

\section{PROBLEM FORMULATION \label{sec:problem}}
\subsection{Mathematical Setup}

\input{_II.Preliminaries/a_definitions}

\subsection{Problem Statement}

\input{_II.Preliminaries/b_problem}

\section{REVIEW OF BACKUP CBF \label{sec:backup_cbf}}
% \subsection{Backup CBF}
\input{_III.Review/a_backupcbf}
\section{REVIEW OF MODEL PREDICTIVE SHIELDING}

\input{_III.Review/b_mps}

\section{REVIEW OF GATEKEEPER \label{sec:gatekeeper}}

\input{_III.Review/c_gatekeeper}

\section{THEORETICAL ANALYSIS \label{sec:analysis}}
\input{_IV.Analysis/a_analysis}

\section{RESULTS}
\input{_V.Experiments/results}

\section{CONCLUSION}
\input{_VI.Conclusion/conclusion}

\addtolength{\textheight}{0 cm}   % This command serves to balance the column lengths
                                  % on the last page of the document manually. It shortens
                                  % the textheight of the last page by a suitable amount.
                                  % This command does not take effect until the next page
                                  % so it should come on the page before the last. Make
                                  % sure that you do not shorten the textheight too much.

%%%%%%%%%%%%%%%%%%%%%%%%%%%%%%%%%%%%%%%%%%%%%%%%%%%%%%%%%%%%%%%%%%%%%%%%%%%%%%%%

%%%%%%%%%%%%%%%%%%%%%%%%%%%%%%%%%%%%%%%%%%%%%%%%%%%%%%%%%%%%%%%%%%%%%%%%%%%%%%%%

%%%%%%%%%%%%%%%%%%%%%%%%%%%%%%%%%%%%%%%%%%%%%%%%%%%%%%%%%%%%%%%%%%%%%%%%%%%%%%%%

% \section*{ACKNOWLEDGMENT}
% This work was supported by Agency for Defense Development.

%%%%%%%%%%%%%%%%%%%%%%%%%%%%%%%%%%%%%%%%%%%%%%%%%%%%%%%%%%%%%%%%%%%%%%%%%%%%%%%%
\bibliographystyle{IEEEtran}
\typeout{}
\bibliography{references.bib}

\end{document}

%% file: preamble.tex
\usepackage{graphics} % for pdf, bitmapped graphics files
\usepackage{subfig} % for subcaption
\usepackage{wrapfig}
\usepackage{amsthm} % for definition, theorm and etc
\usepackage{amsmath,amssymb}
\usepackage{graphicx}
\usepackage{tabularx}
\usepackage{multirow, makecell}
\usepackage{diagbox}
\usepackage{slashbox}
\usepackage{rotating}
\usepackage{cite}

\usepackage{url}
\usepackage{bm}
\usepackage[table]{xcolor}
\usepackage{hyperref}
\usepackage{siunitx} % degree symbol
\usepackage{booktabs}
\usepackage{cleveref}
\usepackage{mathtools} % for := \coloneqq , and split equation with raisetag. ex means x-height
\usepackage{upgreek} % to use uptau

\let\labelindent\relax % not including labelindent (avoid error for overloading \proof)
\usepackage{enumitem} % using enumerate with ref roman option

\usepackage[ruled,vlined]{algorithm2e}

\usepackage{pifont} % for x mark 
\newcommand{\xmark}{\ding{55}}%

\usepackage[nolist, nohyperlinks]{acronym}
\acrodef{MPC}[MPC]{Model Predictive Control}
\acrodef{QP}[QP]{Quadratic Program}
\acrodef{CBF}[CBF]{Control Barrier Function}

\newcommand{\vx}{{\boldsymbol x}}
\newcommand{\vu}{{\boldsymbol u}}

\newcommand{\vz}{{\boldsymbol z}}

\newcommand{\StateSpace}{\mathcal{X}}
\newcommand{\ControlSpace}{\mathcal{U}}
\newcommand{\RealSpace}{\mathbb{R}}

\newcommand{\Rn}{\mathbb{R}^{n}}
\newcommand{\Rm}{\mathbb{R}^{m}}
\newcommand{\Rnm}{\mathbb{R}^{n \times m}}

\newcommand{\calC}{\mathcal{C}} % invariant set

\newcommand{\calV}{\mathcal{V}} % vertice
\newcommand{\calS}{\mathcal{S}} % safe set
\newcommand{\calA}{\mathcal{A}} % locally validated parameters
\newcommand{\calR}{\mathcal{R}} % recovery set
\newcommand{\calI}{\mathcal{I}} % inactive set

\DeclareMathOperator*{\argmin}{arg\,min}

\newtheorem{definition}{Definition}
\newtheorem{theorem}{Theorem}
\theoremstyle{definition}
\theoremstyle{definition}
\newtheorem{problem}{Problem}

\theoremstyle{definition}

\theoremstyle{definition}

\usepackage{fontawesome5}

\newcommand{\gatekeeper}{\texttt{gatekeeper}}

\definecolor{wine}{RGB}{204, 0, 102}
\definecolor{magenta_wine}{RGB}{158, 44, 143}
\definecolor{dusty_wine}{RGB}{143, 59, 101}
\definecolor{ocean}{RGB}{13, 121, 202}
\definecolor{light_ocean}{RGB}{18, 178, 235}
\definecolor{dark_ocean}{RGB}{10, 89, 148}
\definecolor{grey}{RGB}{170, 170, 170}
\definecolor{light-grey}{RGB}{220, 220, 220}
\definecolor{dark_gray}{rgb}{0.2, 0.2, 0.2} 
\definecolor{med-grey}{rgb}{0.3, 0.3, 0.3} 
\definecolor{grape}{RGB}{112,48,160}
\definecolor{aqua}{RGB}{52,172,139}
\definecolor{dark_aqua}{RGB}{35,115,93}
\definecolor{dark_orange}{RGB}{216,92,0}
\definecolor{vibrant_orange}{RGB}{250, 160, 26}
\definecolor{vibrant_blue}{RGB}{14, 120, 255}
\definecolor{vibrant_pink}{RGB}{255, 0, 104}
\definecolor{dark_red}{RGB}{122, 0, 0}
\definecolor{dark_green}{RGB}{0, 92, 34}
\definecolor{dusty_blue}{RGB}{77, 91, 128}
\definecolor{dark_brown}{RGB}{125, 54, 36}
\definecolor{violet}{RGB}{116, 12, 173}

%% file: _I.Introduction/intro.tex
Ensuring safety in autonomous systems with nonlinear dynamics remains a central challenge in robotics and cyber-physical systems. Modern planning and control modules, including reinforcement learning~(RL)~\cite{thananjeyan_recovery_2021, ibarz_how_2021} and sampling-based motion planning~\cite{williams_informationtheoretic_2018, kim_smooth_2022}, often fail to provide formal safety guarantees. A common remedy is to place a \emph{safety filter} between the nominal decision-making algorithms and the plant~\cite{hsu_safety_2024}. The safety filter monitors the nominal command and modifies it only when necessary to preserve safety. Ideally, this intervention is minimally invasive: the nominal controller should remain active whenever its use is certifiably compatible with future safe operation.

Historically, one route to such guarantees has been reachability analysis, which, although rigorous, is computationally prohibitive for high-dimensional nonlinear systems in real time~\cite{bansal_hamiltonjacobi_2017}. Control Barrier Functions~(CBFs) provide a tractable alternative by encoding a safety condition that is imposed pointwise in state and time through an optimization problem, typically formulated as a quadratic program~(QP)~\cite{ames_control_2019}. The main difficulty, however, is constructing a CBF that enforces invariance for nonlinear systems with input constraints~\cite{garg_advances_2024}; a geometric safety constraint alone need not guarantee the existence of a control input that prevents future constraint violation~\cite{kim_learning_2025, kim_how_2025}.

This observation motivates \emph{backup-based} safety filters~\cite{hsu_safety_2024}. Rather than beginning with an explicit analytical barrier function, these methods begin with a known backup maneuver, referred to here as a \emph{backup policy}. Safety is certified by showing that, from the current state or from a predicted future state, the system can switch to that backup policy, remain inside the safe set, and reach a terminal controlled invariant set. Backup CBF~\cite{chen_backup_2021}, Model Predictive Shielding~(MPS)~\cite{bastani_safe_2021, bastani_safe_2021a}, and \gatekeeper{}~\cite{agrawal_gatekeeper_2023, agrawal_gatekeeper_2024} all follow this general principle.

\begin{table}[t]
\centering
\footnotesize
\caption{Comparison overview of backup-based safety filters}
\label{tab:math_comparison}
\begin{tabular}{p{1.15cm} p{2.0cm} p{1.8cm} p{1.8cm}}
\toprule
 & \multicolumn{1}{c}{\textbf{Backup CBF}} & \multicolumn{1}{c}{\textbf{MPS}} & \multicolumn{1}{c}{\textbf{\gatekeeper{}}} \\
\midrule
\textbf{Prerequisite} & \multicolumn{3}{c}{Backup policy $\pi_{\textup{b}}$} \\
\midrule
\multirow{2}{*}{\shortstack[l]{\textbf{Decision}\\ \textbf{Variable}}} & Control input & Boolean & Switching time \\
 & $\vu \in \ControlSpace$ & acceptance & $T_S \in [0, T_H]$ \\
\midrule
\multirow{2}{*}{\shortstack[l]{\textbf{Verification}\\ \textbf{Mechanism}}} & Enforce \eqref{eq:backup_cbf_qp_constraint_path}-\eqref{eq:backup_cbf_qp_constraint_terminal} & Checks validity & Checks validity \\
&  $\forall \tau \in [t, t+T_B]$ & at $T_S = \Delta t$ & for $\max T_S$ \\
\midrule
\multirow{2}{*}{\shortstack[l]{\textbf{Limitation}}} & \multicolumn{2}{c}{Safety evaluation on backup} & Dependency \\
 & \multicolumn{2}{c}{(\autoref{subsec:evaluation_on_backup}) } & on $T_H$ \\
\midrule
\multirow{2}{*}{\shortstack[l]{\textbf{Nominal}\\ \textbf{Tracking}}} & Modifies $\vu_{\textup{nom}}$ & Binary accept- & Maximizes tra- \\
 & via QP \eqref{eq:backup_cbf_qp} & ance on $\vu_{\textup{nom}}$ & cking duration \\
\bottomrule
\end{tabular}
\end{table}

\begin{figure}[t]
    \centering
\includegraphics[width=0.99\linewidth]{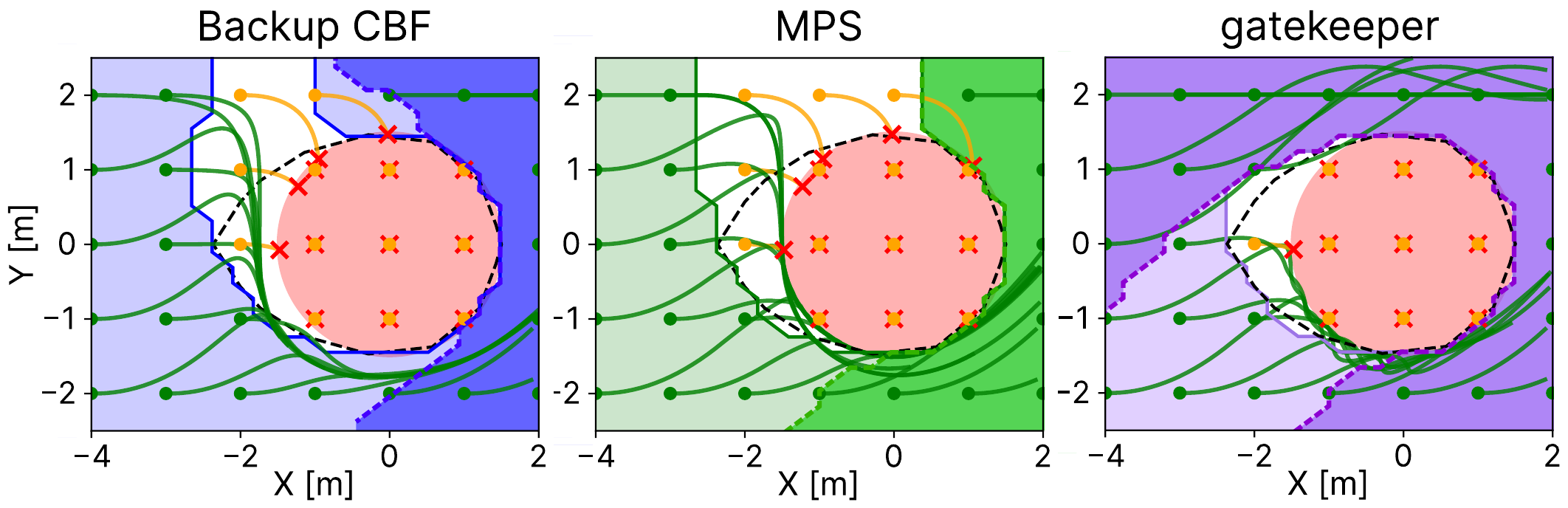}
    \caption{Recovered \textbf{safe sets} (light-colored regions) and \textbf{filter-inactive sets} (dark-colored regions) for the double-integrator example on the slice $(v_x,v_y)=(2,0)$. The black dashed curve shows the viability kernel obtained from HJ reachability on the full 4-dimensional state space. For each $1~\textup{m}\times 1~\textup{m}$ grid of initial positions, we simulate the closed-loop system under each safety filter; green trajectories remain safe, whereas yellow trajectories indicate unsafe or infeasible rollouts.}
\label{fig:double_integrator_sets}
\end{figure}

The contribution of this paper is expository and comparative. We revisit Backup CBF, MPS, and \gatekeeper{} within a unified framework that makes their common structure and key differences explicit (see \autoref{tab:math_comparison}). First, we define a common safety-filter abstraction and trajectory notation and restate the three methods using consistent notation. Second, we raise a structural limitation in backup-based safety filters, which we call \emph{safety evaluation on backup}: safety is often evaluated through the feasibility of a backup maneuver, rather than through the nominal policy's continued safe execution. By comparing the three methods through their filter-inactive sets, we show that \gatekeeper{} can mitigate this conservatism by optimizing over the switching time. Third, we demonstrate these differences through simulation studies that support the theoretical claims, including a planar double-integrator example (see Fig.~\ref{fig:double_integrator_sets}) and two dynamic obstacle avoidance scenarios. This paper is intended as a compact review and tutorial, not as a new algorithmic proposal.

%% file: _II.Preliminaries/a_definitions.tex
We consider a continuous-time control-affine system
\begin{equation}
\dot{\vx} = f(\vx) + g(\vx)\vu,
\label{eq:system}
\end{equation}
where $\vx \in \StateSpace \subset \Rn$ is the state and $\vu \in \ControlSpace \subset \Rm$ is the control input, with $\ControlSpace$ denoting the admissible input set. $f: \StateSpace \to \Rn$ and $g: \StateSpace \to \Rnm$ are locally Lipschitz. We assume a nominal policy 
$\pi_{\textup{nom}}: \StateSpace \to \ControlSpace$ that is task-oriented but not necessarily safe. The safe set and terminal set used throughout the paper are defined as:
\begin{equation}
\calC \coloneqq \{\vx \in \StateSpace \mid h^{\calC}(\vx) \ge 0\},
\label{eq:bcbf_safeset}
\end{equation}
\begin{equation}
\calS_0 \coloneqq \{\vx \in \StateSpace \mid h^{\calS}(\vx) \ge 0\} \subseteq \calC,
\label{eq:bcbf_CI_set}
\end{equation}
where $h^{\calC}, h^{\calS}: \StateSpace \to \RealSpace$ are continuously differentiable.

For any locally Lipschitz feedback policy (or controller) $\pi:\StateSpace\to\ControlSpace$, let $\varphi_t^{\pi}(\vx_0)$ denote the state at time $t\ge 0$ of the closed-loop system $\dot{\vx}=f(\vx)+g(\vx)\pi(\vx)$ with initial condition $\vx(0)=\vx_0$. 
A set $\calA \subseteq \StateSpace$ is \emph{forward invariant} under $\pi$ if $\varphi_t^{\pi}(\vx_0)\in \calA$ for every $\vx_0\in\calA$ and all $t\ge 0$. The set $\calA$ is \emph{controlled invariant} if there exists a feedback policy that renders $\calA$ forward invariant. We assume that $\calS_0$ is a known terminal controlled invariant set; equivalently, there exists a suitable backup policy that renders $\calS_0$ forward invariant.

We next introduce two definitions that provide the common language used throughout the paper. We first define the recoverable set induced by a general feedback policy.
\begin{definition}[Recoverable set induced by a feedback policy]\label{def:recoverable_set}
Let $\pi:\StateSpace\to\ControlSpace$ be a feedback policy, let $\calS_0 \subseteq \calC \subseteq \StateSpace$,
and let $T>0$. The \emph{recoverable set} of $\calS_0$ under $\pi$ over horizon $T$ is
\begin{equation}\label{eq:recoverable_set_general}
\begin{aligned}
\calR^{\pi}(T;\calS_0)
&\coloneqq \{ \vx_0 \in \calC \mid  \varphi_t^{\pi}(\vx_0) \in \calC,\ \forall t \in [0,T], \\
& \qquad \qquad \qquad \quad \varphi_T^{\pi}(\vx_0) \in \calS_0 \}.
\end{aligned}
\end{equation}
Thus, $\calR^{\pi}(T;\calS_0)$ is the set of all initial states from which the policy $\pi$
keeps the system inside $\calC$ over $[0,T]$ and reaches $\calS_0$ at time $T$.
\end{definition}

We define a backup policy based on the recoverable set.
\begin{definition}[Backup policy]\label{def:backup_policy}
Let $\calS_0 \subseteq \calC \subseteq \StateSpace$ and let $T_B>0$. A feedback policy
$\pi_{\textup{b}}:\StateSpace\to\ControlSpace$ is called a \emph{backup policy} for $\calS_0$
if the following two properties hold:

\textup{(i)} The recoverable set induced by $\pi_{\textup{b}}$ at horizon $T_B$, denoted $\calR^{\pi_{\textup{b}}}(T_B;\calS_0)$, contains a neighborhood of $\calS_0$.

\textup{(ii)} The set $\calS_0$ is forward invariant under $\pi_{\textup{b}}$. 
\end{definition}

In words, a backup policy keeps the system inside $\calS_0$ once it is there, and can also drive all states in a surrounding recoverable region~$\calR^{\pi_{\textup{b}}}(T_B;\calS_0)$ into $\calS_0$ within the fixed recovery time $T_B$ without leaving the safe set $\calC$.

This distinction will be useful later. Backup CBF uses the recoverable set induced by the backup policy~$\calR^{\pi_{\textup{b}}}(T_B;\calS_0)$ as an implicit safe set. By contrast, MPS and \gatekeeper{} use the backup policy only through an online validity check that determines when the controller should switch to backup. We make this difference explicit in the subsequent sections.

%\todo{may consider to include ipopt as a baseline saying that it outputs garbarage}

%% file: _II.Preliminaries/b_problem.tex
We now introduce a unified notion of \textbf{\emph{(backup-based) safety filter}}, and then state the formal comparison problem studied in this paper.

\begin{definition}[Safety filter]\label{def:safety_filter}
Given a nominal policy $\pi_{\textup{nom}}:\StateSpace\to\ControlSpace$, a backup policy
$\pi_{\textup{b}}:\StateSpace\to\ControlSpace$, a safe set $\calC \subseteq \StateSpace$, a terminal controlled invariant set $\calS_0 \subseteq \calC$, and a tuple of algorithm-specific hyperparameters $\Theta_{\textup{sf}}$, a policy $\pi_{\textup{sf}}:\StateSpace\to\ControlSpace$ is called a \emph{safety filter}  for System~\eqref{eq:system} if, for every state $\vx\in\calC$, the mapping
$\vu^{\star} = \pi_{\textup{sf}}\big(\vx;\pi_{\textup{nom}},\pi_{\textup{b}},\calC,\calS_0,\Theta_{\textup{sf}}\big) \in \ControlSpace$ results in $\varphi_t^{\pi_{\textup{sf}}}(\vx)\in\calC, \, \forall t\ge 0.$
\end{definition}

\begin{definition}[Filter-inactive set]\label{def:filter_inactive_set}
Let $\pi_{\textup{sf}}$ be a safety filter. Its \emph{filter-inactive set} is
\begin{equation}
\calI_{\textup{sf}}
\coloneqq
\left\{
\vx \in \calC \ \middle| \
\pi_{\textup{sf}}\big(\vx;\pi_{\textup{nom}},\pi_{\textup{b}},\calC,\calS_0,\Theta_{\textup{sf}}\big)
=
\pi_{\textup{nom}}(\vx)
\right\}.
\label{eq:filter_inactive_set}
\end{equation}
Thus, $\vx \in \calI_{\textup{sf}}$ if and only if the safety filter leaves the nominal control input unchanged at that query state $\vx$.
\end{definition}

In this paper, we focus on the three safety filters induced by Backup CBF, MPS, and
\gatekeeper{}, denoted by $\pi_{\textup{BCBF}}$, $\pi_{\textup{MPS}}$, and $\pi_{\textup{GK}}$,
respectively. Our goal is to compare these three filters within the common abstraction of
\autoref{def:safety_filter}, with particular emphasis on the set of states where each filter leaves the nominal control input unchanged. Here, $\calI_{\textup{BCBF}}$, $\calI_{\textup{MPS}}$, and $\calI_{\textup{GK}}$ denote the filter-inactive sets of $\pi_{\textup{BCBF}}$, $\pi_{\textup{MPS}}$, and $\pi_{\textup{GK}}$, respectively.

\begin{problem}\label{prob:comparison_filter_inactive_sets}
Given the three safety filters $\pi_{\textup{BCBF}}, \pi_{\textup{MPS}}$, and $\pi_{\textup{GK}}$, characterize the set relationships among their filter-inactive sets. In particular, establish the inclusion $\calI_{\textup{MPS}} \subseteq \calI_{\textup{GK}}$ (see \autoref{thm:mps_special_case}), and characterize the relation between $\calI_{\textup{GK}}$ and $\operatorname{int}_{\calS}(\calI_{\textup{BCBF}})$, the relative interior of the Backup CBF inactive set within the implicit safe set $\calS$ (see \autoref{thm:backup_cbf_interior}).
\end{problem}

% \begin{problem}\label{prob:comparison_filter_inactive_sets}
% Given the three safety filters $\pi_{\textup{BCBF}}, \pi_{\textup{MPS}}$, and $\pi_{\textup{GK}}$, characterize the set relationships among their filter-inactive sets. In particular, establish the inclusion $\calI_{\textup{MPS}} \subseteq \calI_{\textup{GK}}$ (see \autoref{thm:mps_special_case}), and characterize the relation between $\calI_{\textup{GK}}$ and the $\calI_{\textup{BCBF}}$ that lies inside the Backup CBF implicit safe set~$\calS$ (see \autoref{thm:backup_cbf_interior}). 
% \end{problem}
% Specifically, with $\calS \coloneqq \calR^{\pi_{\textup{b}}}(T_B;\calS_0)$, determine whether
% \[
% \operatorname{int}_{\calS}\!\left(\calI_{\textup{BCBF}} \cap \calS\right) \subseteq \calI_{\textup{GK}}.
% \]

A larger filter-inactive set means that the safety filter intervenes less often on the nominal policy while preserving its own safety guarantee. The remainder of the paper reviews Backup CBF, MPS, and \gatekeeper{} within the unified framework above, and then proves the set inclusions posed in \autoref{prob:comparison_filter_inactive_sets}.

%% file: _III.Review/a_backupcbf.tex
Backup CBF~\cite{chen_backup_2021} $\pi_{\textup{BCBF}}$ defines the implicit safe set as the recoverable set induced by the backup policy, and then solves a QP to keep the state inside that set. In contrast to MPS and \gatekeeper{}, the output of Backup CBF online is not a binary choice between $\pi_{\textup{nom}}$ and $\pi_{\textup{b}}$; instead, it is the closest admissible input to the nominal input that satisfies the barrier constraints.

\subsection{Implicit safe set induced by the backup policy}

Given the backup policy $\pi_{\textup{b}}$ and a backup horizon $T_B>0$, Backup CBF defines the implicit safe set
\begin{equation} \label{eq:implicit_set}
\begin{aligned}
\calS  \coloneqq \calR^{\pi_{\textup{b}}}(T_B;\calS_0) = & \{ \vx \in \calC  \mid \forall \tau \in [0,T_B], \\
& \varphi_{\tau}^{\pi_{\textup{b}}}(\vx) \in \calC, \ \varphi_{T_B}^{\pi_{\textup{b}}}(\vx) \in \calS_0 \}.
\end{aligned}
\end{equation}
By \autoref{def:recoverable_set}, the set $\calS$ is the recoverable set induced by the backup policy~$\pi_{\textup{b}}$ over horizon $[0, T_B]$. It consists of all states from which immediate execution of the backup policy remains inside $\calC$ and reaches $\calS_0$ after time $T_B$.

\begin{theorem}[{\cite{gurriet_online_2018}}]\label{thm:implicit_set}
Assume that $\calS_0 \subseteq \calC$ is forward invariant under the backup policy $\pi_{\textup{b}}$. Then, for every $T_B>0$, the set $\calS$ in \eqref{eq:implicit_set} is a controlled invariant set and satisfies $\calS_0 \subseteq \calS \subseteq \calC$.
\end{theorem}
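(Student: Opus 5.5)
The natural route is to show that $\calS$ is forward invariant under the backup policy $\pi_{\textup{b}}$ itself. This immediately gives controlled invariance, with $\pi_{\textup{b}}$ as the witnessing feedback. The argument has three steps: the two inclusions, then invariance via the flow (semigroup) property of $\varphi^{\pi_{\textup{b}}}$ combined with forward invariance of $\calS_0$.

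\emph{Inclusions.} The inclusion $\calS \subseteq \calC$ holds by definition, since \eqref{eq:implicit_set} requires $\vx\in\calC$. For $\calS_0\subseteq\calS$, take $\vx\in\calS_0$. By the assumed forward invariance of $\calS_0$ under $\pi_{\textup{b}}$, we have $\varphi_\tau^{\pi_{\textup{b}}}(\vx)\in\calS_0\subseteq\calC$ for all $\tau\ge 0$. In particular, the trajectory stays in $\calC$ on $[0,T_B]$ and satisfies $\varphi_{T_B}^{\pi_{\textup{b}}}(\vx)\in\calS_0$, so $\vx\in\calS$.

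\emph{Forward invariance.} Fix $\vx\in\calS$ and $t\ge 0$, and set $\vy=\varphi_t^{\pi_{\textup{b}}}(\vx)$. Local Lipschitzness of $f,g,\pi_{\textup{b}}$ gives unique solutions, so the semigroup identity $\varphi_\tau^{\pi_{\textup{b}}}(\vy)=\varphi_{t+\tau}^{\pi_{\textup{b}}}(\vx)$ holds. We need $\varphi_{t+\tau}^{\pi_{\textup{b}}}(\vx)\in\calC$ for all $\tau\in[0,T_B]$ and $\varphi_{t+T_B}^{\pi_{\textup{b}}}(\vx)\in\calS_0$. Split according to whether times exceed $T_B$:
\begin{itemize}
\item For $s=t+\tau\le T_B$, membership in $\calC$ follows from $\vx\in\calS$.
\item For $s\ge T_B$, write $\varphi_s^{\pi_{\textup{b}}}(\vx)=\varphi_{s-T_B}^{\pi_{\textup{b}}}\big(\varphi_{T_B}^{\pi_{\textup{b}}}(\vx)\big)$. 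Since $\varphi_{T_B}^{\pi_{\textup{b}}}(\vx)\in\calS_0$, forward invariance of $\calS_0$ places this point in $\calS_0\subseteq\calC$.
\end{itemize}
Applying the second case with $s=t+T_B\ge T_B$ yields the terminal condition. Finally, $\vy\in\calC$ by the case $\tau=0$. Hence $\vy\in\calS$, so $\calS$ is forward invariant under $\pi_{\textup{b}}$ and therefore controlled invariant.

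\emph{Main obstacle.} The step needing care is not the set algebra but well-posedness of the flow. We must ensure $\varphi_s^{\pi_{\textup{b}}}(\vx)$ exists for all $s\ge 0$, with no finite escape, and that the semigroup property holds. Existence on $[0,T_B]$ is implicit in $\vx\in\calS$. For existence beyond $T_B$, I would rely on forward invariance of $\calS_0$ as stated in the paper's definition, which presupposes that solutions from $\calS_0$ are defined for all $t\ge0$. Concatenation then gives global existence from any $\vx\in\calS$, and uniqueness under Lipschitz continuity justifies the semigroup identity.
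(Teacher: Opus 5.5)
Your proof is correct and matches the standard argument for this result. The paper states the theorem without proof and defers to the cited reference; that argument is the same as yours: $\calS$ is forward invariant under $\pi_{\textup{b}}$ itself, by the semigroup property of $\varphi^{\pi_{\textup{b}}}$ and the invariance of $\calS_0$, so $\pi_{\textup{b}}$ witnesses controlled invariance, and your case split at $s=T_B$ plus your note that global existence of the backup flow follows from the definition of forward invariance of $\calS_0$ cover the only technical points.
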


In the Backup CBF construction, the feedback law is provided online by the quadratic program below. The horizon $T_B$ determines how much the terminal set $\calS_0$ is enlarged: when $T_B=0$, one recovers $\calS=\calS_0$, whereas increasing $T_B$ enlarges the implicit safe set.

\subsection{Backup CBF-QP}

The Backup CBF $h_{\textup{BCBF}}$ is defined from the minimum safety margin observed along the predicted backup trajectory:
\begin{equation}
h_{\textup{BCBF}}(\vx)
\coloneqq
\min\left\{
h^{\calS}\left(\varphi_{T_B}^{\pi_{\textup{b}}}(\vx)\right),
\min_{\tau \in [0,T_B]} h^{\calC}\left(\varphi_{\tau}^{\pi_{\textup{b}}}(\vx)\right)\right\}. \nonumber
\label{eq:backup_cbf_def}
\end{equation}
By construction, the $0$-superlevel set of $h_{\textup{BCBF}}$ coincides with the implicit safe set in \eqref{eq:implicit_set}.

To implement the Backup CBF online, $h_{\textup{BCBF}}$ is written as a function of time along the current closed-loop trajectory. At time $t$, with current state $\vx(t)$, define
\begin{equation}
\begin{aligned}
h_{\textup{BCBF}}(t)
&\coloneqq \min\Bigl\{ h^{\calS}\bigl(\varphi_{T_B}^{\pi_{\textup{b}}}(\vx(t))\bigr), \\
&\hspace{3.7em}\min_{\tau \in [t,t+T_B]} h^{\calC}\bigl(\varphi_{\tau-t}^{\pi_{\textup{b}}}(\vx(t))\bigr) \Bigr\}.
\label{eq:backup_cbf_time}
\end{aligned}
\end{equation}
The corresponding CBF constraint is
\begin{equation}\label{eq:backupcbf_original_constraint}
\dot{h}_{\textup{BCBF}}(t) + \alpha\!\left(h_{\textup{BCBF}}(t)\right) \ge 0,
\end{equation}
where $\alpha$ is a class-$\mathcal{K}$ function. Following~\cite{chen_backup_2021}, Backup CBF does not enforce \eqref{eq:backupcbf_original_constraint} directly on the minimum in \eqref{eq:backup_cbf_time}; instead, it enforces sufficient constraints on the individual terms appearing in that minimum.

Backup CBF computes the filtered input by solving the following QP:
\begin{subequations}\label{eq:backup_cbf_qp}
\begin{align}
& \quad \vu^{\star}_{\textup{BCBF}} =\ 
\pi_{\textup{BCBF}}\big(\vx;\pi_{\textup{nom}},\pi_{\textup{b}},\calC,\calS_0,T_B,\alpha\big) \label{eq:backup_cbf_qp_obj} \\
& \qquad \quad\  \coloneqq \  \argmin_{\vu \in \ControlSpace} \ \|\vu-\vu_{\textup{nom}}\|_2^2 \notag \\
\text{s.t.} \quad &  \forall \tau \in [t,t+T_B] \nonumber \\
& \frac{\operatorname{d} h^{\calC}\!\left(\varphi_{\tau-t}^{\pi_{\textup{b}}}(\vx)\right)}{\operatorname{d}t}(\vx,\vu)
+ \alpha\!\left(h^{\calC}\!\left(\varphi_{\tau-t}^{\pi_{\textup{b}}}(\vx)\right)\right) \ge 0, \label{eq:backup_cbf_qp_constraint_path} \\
& \frac{\operatorname{d} h^{\calS}\!\left(\varphi_{T_B}^{\pi_{\textup{b}}}(\vx\right)}{\operatorname{d}t}(\vx,\vu)
+ \alpha\!\left(h^{\calS}\!\left(\varphi_{T_B}^{\pi_{\textup{b}}}(\vx)\right)\right) \ge 0, 
\label{eq:backup_cbf_qp_constraint_terminal}
\end{align}
\end{subequations}
where $\vu_{\textup{nom}} \coloneqq \pi_{\textup{nom}}(\vx)$ is the nominal input. The derivatives in \eqref{eq:backup_cbf_qp_constraint_path}-\eqref{eq:backup_cbf_qp_constraint_terminal} are taken with respect to the current system dynamics $\dot{\vx}=f(\vx)+g(\vx)\vu$ at the current state $\vx(t)$.

Let $\calI_{\textup{BCBF}}$ denote the filter-inactive set of $\pi_{\textup{BCBF}}$ as in \autoref{def:filter_inactive_set}. Equivalently, $\vx \in \calI_{\textup{BCBF}}$ if and only if the nominal input~$\vu_{\textup{nom}}$ already satisfies all constraints in \eqref{eq:backup_cbf_qp_constraint_path}-\eqref{eq:backup_cbf_qp_constraint_terminal}.

Thus, unlike MPS and \gatekeeper{}, Backup CBF uses the backup policy only to construct the implicit safe set and the associated constraints; the control input applied online is the QP solution \eqref{eq:backup_cbf_qp}, not the backup policy itself. In implementation, the continuum of constraints in \eqref{eq:backup_cbf_qp_constraint_path} is approximated by finitely many collocation points~\cite{chen_backup_2021}.

%% file: _III.Review/b_mps.tex
Model Predictive Shielding (MPS)~\cite{bastani_safe_2021} $\pi_{\textup{MPS}}$ provides the second instantiation of \autoref{def:safety_filter}. The original development of the method is for discrete-time systems. For comparison with Backup CBF and \gatekeeper{}, we restate its logic in continuous time, with the digital update made explicit, while keeping the algorithmic mechanism unchanged. The plant state evolves continuously, but the monitor is queried only at discrete update times.

The original MPS presentation separates the backup mechanism into a recovery component and an invariance component~\cite{bastani_safe_2021}. For the purposes of the present paper, we absorb both into the single backup policy $\pi_{\textup{b}}$ introduced in \autoref{def:backup_policy}. This keeps the comparison with Backup CBF and \gatekeeper{} at the level of the shared safety-filter abstraction.

\subsection{Digital update times, candidate trajectories, and validity}

Let $\Delta t>0$ denote the control-update interval of the digital safety monitor, and define the update times by $t_k \coloneqq t_0 + k\Delta t$, $k \in \mathbb{Z}_{\ge 0}$. At each update time, the current state is $\vx_k \coloneqq \vx(t_k)$. Over each interval $[t_k,t_{k+1})$, the monitor does not recompute its decision; it continuously applies the feedback law selected at time $t_k$.

\begin{definition}[Candidate trajectory]\label{def:candidate_trajectory}
Let $\vx_k=\vx(t_k)$ be the current state at update time $t_k$, and let $T_S \ge 0$ denote a switching time. The corresponding \emph{candidate trajectory} is the map $\chi^{\textup{can}}(\tau;\vx_k,T_S):[0,T_S+T_B]\to\StateSpace$ defined by
\begin{equation} \nonumber %\label{eq:candidate_trajectory}
\chi^{\textup{can}}(\tau;\vx_k,T_S) 
\coloneqq
\left\{
\begin{array}{@{}l@{\,}l@{}}
\varphi_{\tau}^{\pi_{\textup{nom}}}(\vx_k), & \tau \in [0,T_S], \\[1mm]
\varphi_{\tau-T_S}^{\pi_{\textup{b}}}\!\left(\varphi_{T_S}^{\pi_{\textup{nom}}}(\vx_k)\right), & \tau \in [T_S,T_S+T_B].
\end{array}
\right.
\end{equation}
\end{definition}
Thus, the candidate trajectory first follows the nominal policy for duration $T_S$, and then follows the backup policy for duration $T_B$.

\begin{definition}[Validity indicator]\label{def:valid}
For a current state $\vx_k$ and horizons $T_S,T_B \ge 0$, let
$\chi_{\vx_k}^{\textup{can}}(\tau) \coloneqq \chi^{\textup{can}}(\tau;\vx_k,T_S)$. Define the validity indicator
\begin{equation}
\operatorname{Valid}(\vx_k;T_S,T_B)
\coloneqq
\left\{
\begin{array}{@{}l@{\,}l@{}}
1, & \chi_{\vx_k}^{\textup{can}}(\tau) \in \calC,\ \forall \tau \in [0,T_S+T_B], \\
& \chi_{\vx_k}^{\textup{can}}(T_S+T_B) \in \calS_0, \\
0, & \text{otherwise}. 
\end{array}
\right. \nonumber
%\label{eq:valid_indicator}
\end{equation}
Equivalently, $\operatorname{Valid}(\vx_k;T_S,T_B)=1$ if and only if the candidate trajectory remains in $\calC$ over the entire horizon $[0,T_S+T_B]$ and reaches $\calS_0$ at time $T_S+T_B$. \footnote{When the backup horizon $T_B$ is fixed in the discussion, we write $\operatorname{Valid}(\vx_k;T_S)$ for notational simplicity.}
\end{definition}

We define the validity set associated with switching time $T_S$ as
\begin{equation}
\calV(T_S;T_B)
\coloneqq
\left\{
\vx_k \in \calC \ \middle| \
\operatorname{Valid}(\vx_k;T_S,T_B)=1
\right\}.
\label{eq:valid_set}
\end{equation}
When $T_B$ is fixed, we simply write $\calV(T_S)$.

This certificate is local to the candidate trajectory being checked. Specifically, MPS verifies whether the nominal-then-backup trajectory constructed from the current state $\vx_k$ remains in $\calC$ and reaches $\calS_0$; it does not require the stronger global statement that the backup policy can recover every state in $\calC$.

\begin{algorithm}[t]
    \footnotesize
    \SetAlgoLined
    \caption{\texttt{MPS}($\vx_k; \pi_{\textup{nom}}, \pi_{\textup{b}}$)}
    \label{alg:mps}
    \DontPrintSemicolon
    \KwIn{Current state~$\vx_k = \vx(t_k)$; nominal policy~$\pi_{\textup{nom}}$; backup policy~$\pi_{\textup{b}}$}
    \KwOut{Committed control input $\vu(t_k)$}
    \BlankLine
    $T_{S}^{\star} \gets 0$\;
    % Red color for the fixed check
    \color{green}
    $T_{S} \gets \Delta t$\;
    \color{black}
    Construct $\chi^{\textup{can}}(\cdot;\vx_k,T_S)$ using $\pi_{\textup{nom}}$ over $[0,T_{S}]$ and $\pi_{\textup{b}}$ thereafter (\autoref{def:candidate_trajectory})\;
    \If{$\chi^{\textup{can}}(\cdot;\vx_k,T_S)$ is valid by \autoref{def:valid}}{
        $T_{S}^{\star} \gets T_{S}$\;
    }
    
    \BlankLine
    \If{$T_{S}^{\star} > 0$}{
        \Return $\pi_{\textup{nom}}(\vx_k)$ \tcp*{Nominal policy}
    }
    \Else{
        \Return $\pi_{\textup{b}}(\vx_k)$ \tcp*{Backup policy}
    }
\end{algorithm}

\subsection{MPS decision rule}

At update time $t_k$, MPS considers only one switching time, namely the next digital update interval,
\begin{equation}
T_S = \Delta t.
\end{equation}
Accordingly, the induced safety filter, with dependence on $\pi_{\textup{nom}}, \pi_{\textup{b}}, \calC, \calS_0, \Delta t$, and $T_B$ suppressed for brevity, is
\begin{equation}
\pi_{\textup{MPS}}(\vx_k)=
\begin{cases}
\pi_{\textup{nom}}(\vx_k), & \operatorname{Valid}(\vx_k;\Delta t)=1, \\
\pi_{\textup{b}}(\vx_k), & \operatorname{Valid}(\vx_k;\Delta t)=0.
\end{cases}
\label{eq:mps_decision_rule}
\end{equation}
Thus, MPS leaves the nominal input unchanged exactly on the set
\begin{equation}
\calI_{\textup{MPS}}=\left\{\vx_k \in \calC \mid \operatorname{Valid}(\vx_k;\Delta t)=1\right\} = \calV(\Delta t).
\label{eq:mps_inactive_set}
\end{equation}

This restatement matches the logic of the original \texttt{IsRecoverable}$(\cdot)$ check in~\cite{bastani_safe_2021}, but expresses it in the same notation used throughout this paper. The essential structural point is that MPS has a singleton switching-time search set: it checks only $T_S=\Delta t$, and does not ask whether switching later than $\Delta t$ might still be safe (see \autoref{alg:mps}).

\begin{algorithm}[t]
    \footnotesize
    \SetAlgoLined
    \caption{\texttt{\gatekeeper{}}($\vx_k; \pi_{\textup{nom}}, \pi_{\textup{b}}$)}
    \label{alg:gatekeeper}
    \DontPrintSemicolon
    \KwIn{Current state~$\vx_k = \vx(t_k)$; nominal policy~$\pi_{\textup{nom}}$; backup policy~$\pi_{\textup{b}}$}
    \KwOut{Committed control input $\vu(t_k)$}
    \BlankLine
    
    \color{wine}
    \If{UpdateTriggered($t_k$)}{
        \color{black}
        $T_{S}^{\star} \gets 0$\;
        \color{wine}
        \For{$T_{S} \gets T_{H}$ \KwTo $0$}{ 
        \tcp*[h]{Parallelizable} \\
            \color{black}
            Construct $\chi^{\textup{can}}(\cdot;\vx_k,T_S)$ using $\pi_{\textup{nom}}$ over $[0,T_{S}]$ and $\pi_{\textup{b}}$ thereafter (\autoref{def:candidate_trajectory})\;
            \If{$\chi^{\textup{can}}(\cdot;\vx_k,T_S)$ is valid by \autoref{def:valid}}{
                $T_{S}^{\star} \gets T_{S}$\;
                \textbf{break}\;
            }
        }
    }
    \Else{
        $T_S^\star \gets \max\{T_S^\star-\Delta t,\,0\}$ \;
    }
    \color{black}
    
    \BlankLine
    \If{$T_{S}^{\star} > 0$}{
        \Return $\pi_{\textup{nom}}(\vx_k)$ \tcp*{Nominal policy}
    }
    \Else{
        \Return $\pi_{\textup{b}}(\vx_k)$ \tcp*{Backup policy}
    }
\end{algorithm}

%% file: _III.Review/c_gatekeeper.tex
\gatekeeper{}~\cite{agrawal_gatekeeper_2024} $\pi_{\textup{GK}}$ uses the same backup policy, the same candidate trajectory, and the same validity check as MPS. The key difference is that the switching time~$T_S$ to the backup is treated as a decision variable rather than being fixed a priori.

\subsection{The problem of ``safety evaluation on backup" \label{subsec:evaluation_on_backup}}

A common source of conservatism in backup-based safety filters is that safety is evaluated along the backup trajectory, rather than along the nominal trajectory itself. As a result, the filter may modify the nominal control input $\vu_{\textup{nom}}$ even when continued execution of the nominal policy would in fact remain safe.

In Backup CBF, this dependence appears explicitly in the backup-flow terms~\cite{chen_backup_2021} of the constraints \eqref{eq:backup_cbf_qp_constraint_path}-\eqref{eq:backup_cbf_qp_constraint_terminal}. In MPS, it appears through the validity check (\autoref{def:valid}) applied to the candidate trajectory after a fixed nominal execution time $\Delta t$. At update time $t_k$, MPS checks only whether $\operatorname{Valid}(\vx_k;\Delta t)=1$, i.e., whether following the nominal policy for one update interval and then switching to the backup policy remains in $\calC$ and reaches $\calS_0$. If this test fails, MPS suppresses the nominal policy and applies the backup policy instead.

This can be conservative when backup becomes feasible only after additional nominal progress. A state may fail the validity test for an early switch to backup even though continued nominal execution for a longer duration would move the system to a state from which the backup maneuver is valid. The key limitation is therefore not the use of a backup policy itself, but fixing the switching time a priori. \gatekeeper{} reduces this conservatism by searching over multiple switching times instead of fixing the switch at $\Delta t$.

\subsection{Maximizing the switching time}

Let $T_H>0$ denote the search horizon for the nominal segment. At an update state $\vx_k$, \gatekeeper{} computes
\begin{equation}
T_S^{\star}(\vx_k)
\coloneqq
\sup \{ T_S \in [0,T_H] \mid \operatorname{Valid}(\vx_k;T_S)=1 \}.
\label{eq:max_ts}
\end{equation}
Thus, $T_S^{\star}(\vx_k)$ is the longest certified duration for which the nominal policy can continue before switching to the backup policy. The corresponding safety filter, with dependence on $\pi_{\textup{nom}}, \pi_{\textup{b}}, \calC, \calS_0, T_B$, and $T_H$ suppressed for brevity, is
\begin{equation}
\pi_{\textup{GK}}(\vx_k)=
\begin{cases}
\pi_{\textup{nom}}(\vx_k), & T_S^{\star}(\vx_k) > 0, \\
\pi_{\textup{b}}(\vx_k), & T_S^{\star}(\vx_k) = 0.
\end{cases}
\label{eq:gk_decision_rule}
\end{equation}
Hence, \gatekeeper{} leaves the nominal input unchanged on the set
\begin{equation}
\calI_{\textup{GK}}
=
\left\{ \vx_k \in \calC \mid T_S^{\star}(\vx_k) > 0 \right\}
=
\bigcup_{T_S \in (0,T_H]} \calV(T_S;T_B).
\label{eq:gk_inactive_set}
\end{equation}
Thus, \gatekeeper{} accepts the nominal controller whenever there exists some positive certified duration of nominal execution before switching to backup.

\subsection{Update times and search strategy}

The decision rule in \eqref{eq:gk_decision_rule} describes the behavior at an update instant. The implementation itself is stateful: once a positive switching time has been certified, \gatekeeper{} may continue executing the nominal policy over subsequent monitor updates without recomputing the full search, until that certificate must be refreshed. In \autoref{alg:gatekeeper}, this is represented by storing the previously certified value of $T_S^\star$ and decrementing it by $\Delta t$ between successive updates when no new search is triggered.

In practice, the continuous search in \eqref{eq:max_ts} is approximated on the grid $\mathcal{T}_H \coloneqq \{0,\Delta t,2\Delta t,\dots,N_H\Delta t\}, \, T_H = N_H\Delta t$, and the implementation selects
\begin{equation}
T_{S}^\star(\vx_k)
\coloneqq
\max\{T_S \in \mathcal{T}_H \mid \operatorname{Valid}(\vx_k;T_S)=1\}.
\label{eq:grid_search}
\end{equation}

The original implementation~\cite{agrawal_gatekeeper_2024} in \autoref{alg:gatekeeper} performs this search sequentially, starting from the largest candidate and terminating once a valid switching time is found. The decision logic is, however, parallelizable, since for a fixed current state $\vx_k$, the values $\operatorname{Valid}(\vx_k;T_S)$ for different $T_S \in \mathcal{T}_H$ can be evaluated independently. Therefore, we propose to evaluate all candidate trajectories in parallel and then choose the largest valid switching time in the discretized set, without changing the underlying algorithmic logic.

We refer interested readers to \cite{agrawal_gatekeeper_2023, agrawal_gatekeeper_2024} for the original \gatekeeper{} formulation and \cite{agrawal_online_2025} for further theoretical discussion, and to~\cite{kim_visibilityaware_2025, naveed_provably_2026, cherenson_autonomy_2026} for additional applications in broader robotic settings.

%% file: _IV.Analysis/a_analysis.tex
We compare the three filters through their filter-inactive sets, where each method leaves the nominal policy unchanged. This viewpoint is directly aligned with the principle of minimal intervention. For MPS and \gatekeeper{}, the comparison is made at update instants at which a new validity check is performed from the current state.

\subsection{MPS is a special case of \gatekeeper{}}

The relation between MPS and \gatekeeper{} is exact because both methods use the same candidate trajectory and the same validity indicator; they differ only in the set of switching times they examine.

\begin{theorem}\label{thm:mps_special_case}
Assume that $0<\Delta t \le T_H$. Then
\begin{equation}
\mathcal{I}_{\textup{MPS}} \subseteq \mathcal{I}_{\textup{GK}}.
\label{eq:mps_subset_gk}
\end{equation}
\end{theorem}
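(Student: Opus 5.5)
The plan is a direct set-inclusion argument built on the characterizations \eqref{eq:mps_inactive_set} and \eqref{eq:gk_inactive_set}. Both filters rely on the same candidate trajectory (\autoref{def:candidate_trajectory}) and the same validity indicator (\autoref{def:valid}), with the same backup horizon $T_B$. It therefore suffices to show that the single switching time examined by MPS lies inside the search interval used by \gatekeeper{}.

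\textbf{Step 1.} Fix an arbitrary $\vx_k \in \calI_{\textup{MPS}}$. By \eqref{eq:mps_inactive_set}, $\vx_k \in \calC$ and $\operatorname{Valid}(\vx_k;\Delta t)=1$, so $\vx_k \in \calV(\Delta t;T_B)$.

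\textbf{Step 2.} The hypothesis $0<\Delta t\le T_H$ gives $\Delta t \in (0,T_H]$. Hence $\calV(\Delta t;T_B) \subseteq \bigcup_{T_S\in(0,T_H]}\calV(T_S;T_B)=\calI_{\textup{GK}}$, by the second equality in \eqref{eq:gk_inactive_set}.

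\textbf{Step 3 (consistency check).} Equivalently, one can argue through \eqref{eq:max_ts}. Since $\Delta t$ belongs to the feasible set $\{T_S\in[0,T_H]\mid \operatorname{Valid}(\vx_k;T_S)=1\}$, the supremum satisfies $T_S^\star(\vx_k)\ge \Delta t>0$. Then \eqref{eq:gk_decision_rule} returns $\pi_{\textup{nom}}(\vx_k)$.

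For the discretized implementation \eqref{eq:grid_search}, the same reasoning applies. Because $T_H=N_H\Delta t$ with $N_H\ge 1$, we have $\Delta t\in\mathcal{T}_H$, so the grid maximum is at least $\Delta t$. This shows the inclusion holds both for the continuous definition and for \autoref{alg:gatekeeper} at any update instant where a fresh search is triggered.

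\textbf{Main obstacle.} There is little mathematical difficulty here. The only delicate point is interpretive: both inactive sets must be compared at the same update instant, using a fresh validity check from $\vx_k$. The stateful decrement branch of \autoref{alg:gatekeeper} does not fit this setting, and the analysis section explicitly restricts the comparison to update instants, so I would state that convention before beginning the inclusion argument.
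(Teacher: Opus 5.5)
Your proposal is correct and follows the paper's proof. Both observe that $\operatorname{Valid}(\vx_k;\Delta t)=1$ together with $\Delta t\in(0,T_H]$ makes $\Delta t$ feasible in \eqref{eq:max_ts}, so $T_S^\star(\vx_k)\ge\Delta t>0$ and hence $\vx_k\in\calI_{\textup{GK}}$ by \eqref{eq:gk_inactive_set}; your union-based Step 2 and your remark on the grid \eqref{eq:grid_search} are equivalent restatements of this.
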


\begin{proof}
Take any $\vx_k \in \mathcal{I}_{\textup{MPS}}$. By \eqref{eq:mps_inactive_set}, this is equivalent to $\operatorname{Valid}(\vx_k;\Delta t)=1$. Since $\Delta t \in (0,T_H]$, the switching time $T_S=\Delta t$ is feasible for the \gatekeeper{} search in \eqref{eq:max_ts}. Therefore $T_S^\star(\vx_k) \ge \Delta t > 0$, which implies $\vx_k \in \mathcal{I}_{\textup{GK}}$ by \eqref{eq:gk_inactive_set}.
\end{proof}

\autoref{thm:mps_special_case} gives the precise sense in which \gatekeeper{} is less intervening than MPS under identical safety assumptions: every state accepted by MPS is also accepted by \gatekeeper{}, and \gatekeeper{} may additionally accept states that become recoverable only after a later switch to the backup policy.

% \begin{figure}[t]
%     \centering
%     \includegraphics[width=0.99\linewidth]{Figures/proof.pdf}
%     \caption{\todo{caption will be updated} }
%     \label{fig:proof}
% \end{figure}

\subsection{Relation with Backup CBF}

We next analyze the relation between Backup CBF and \gatekeeper{}. Unlike MPS, the connection is indirect: Backup CBF does not verify nominal-then-backup switched trajectories online, but instead renders the implicit safe set $\calS=\calR^{\pi_{\textup{b}}}(T_B;\calS_0)$ forward invariant. The relevant comparison is therefore with the portion of the Backup CBF inactive set that lies strictly inside $\calS$. We denote by $\operatorname{int}_{\calS}(\calI_{\textup{BCBF}})$ the relative interior of $\calI_{\textup{BCBF}}$ in $\calS$, i.e.,
\begin{equation}
\operatorname{int}_{\calS}(\calI_{\textup{BCBF}}) \coloneqq \left\{ \vx \in \calS \mid \exists \varepsilon>0 \text{ s.t. } B_{\varepsilon}(\vx)\cap\calS \subseteq \calI_{\textup{BCBF}} \right\},
\label{eq:relative_interior_bcbf}
\end{equation}
where $B_{\varepsilon}(\vx)\coloneqq\{\vz\in\Rn \mid \|\vz-\vx\|_2<\varepsilon\}$. We compare \gatekeeper{} with this set.

\begin{theorem}\label{thm:backup_cbf_interior}
Let $\operatorname{int}_{\calS}(\calI_{\textup{BCBF}})$ be as defined in \eqref{eq:relative_interior_bcbf}. Then,
\begin{equation}
\operatorname{int}_{\calS}(\calI_{\textup{BCBF}}) \subseteq \calI_{\textup{GK}}.
\end{equation}
\end{theorem}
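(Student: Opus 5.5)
The plan is to show that every $\vx\in\operatorname{int}_{\calS}(\calI_{\textup{BCBF}})$ admits a small switching time $\delta\in(0,T_H]$ with $\operatorname{Valid}(\vx;\delta)=1$. By \eqref{eq:gk_inactive_set} this gives $\vx\in\calI_{\textup{GK}}$.

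The key observation is the following. By \eqref{eq:implicit_set}, a state $\vz$ lies in $\calS$ exactly when the pure backup rollout from $\vz$ stays in $\calC$ for $T_B$ and ends in $\calS_0$. So if the nominal flow keeps the state inside $\calS$ on $[0,\delta]$, then the candidate trajectory $\chi^{\textup{can}}(\cdot;\vx,\delta)$ is valid. Indeed, its nominal segment stays in $\calS\subseteq\calC$ by \autoref{thm:implicit_set}. Its backup segment starts at $\varphi_\delta^{\pi_{\textup{nom}}}(\vx)\in\calS$, so it stays in $\calC$ and reaches $\calS_0$ at time $\delta+T_B$, as \autoref{def:valid} requires. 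The proof therefore reduces to a local forward-invariance statement: the nominal closed loop does not leave $\calS$ for some positive time when started at $\vx$.

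\textbf{Step 1 (local neighborhood).} Fix $\varepsilon>0$ with $B_\varepsilon(\vx)\cap\calS\subseteq\calI_{\textup{BCBF}}$. Since $f,g,\pi_{\textup{nom}}$ are locally Lipschitz, the solution $\vx(t)=\varphi_t^{\pi_{\textup{nom}}}(\vx)$ exists and is continuous. Hence there is $t_\varepsilon>0$ such that $\vx(t)\in B_\varepsilon(\vx)$ for all $t\in[0,t_\varepsilon]$.

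\textbf{Step 2 (barrier terms along the nominal flow).} Consider the finitely or infinitely many individual terms $h_\tau(t)\coloneqq h^{\calC}(\varphi^{\pi_{\textup{b}}}_{\tau}(\vx(t)))$ for $\tau\in[0,T_B]$, together with $h_T(t)\coloneqq h^{\calS}(\varphi^{\pi_{\textup{b}}}_{T_B}(\vx(t)))$. All are nonnegative at $t=0$ because $\vx\in\calS$. Whenever $\vx(t)\in B_\varepsilon(\vx)\cap\calS\subseteq\calI_{\textup{BCBF}}$, the nominal input satisfies \eqref{eq:backup_cbf_qp_constraint_path}--\eqref{eq:backup_cbf_qp_constraint_terminal}. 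Hence $\dot h_i\ge-\alpha(h_i)$ for every term, and the comparison lemma keeps each $h_i$ nonnegative as long as the trajectory remains in $B_\varepsilon(\vx)\cap\calS$.

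\textbf{Step 3 (first-exit argument).} Let $\bar t=\sup\{s\le t_\varepsilon: \vx(t)\in\calS\ \forall t\in[0,s]\}$. Then conclude $\bar t>0$ by contradiction, using Step 2 and closedness of $\calS$. Closedness holds because $\calS$ is the zero-superlevel set of the continuous function $h_{\textup{BCBF}}$. Finally take $\delta=\min\{\bar t, T_H\}$, or any smaller positive value, which yields validity as argued in the opening paragraph.

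\textbf{Main obstacle.} The hard part is Step 3. The CBF inequality is only guaranteed on $\calS\cap B_\varepsilon(\vx)$, whereas standard CBF invariance arguments use the condition on a neighborhood of $\partial\calS$. At a boundary point where some $h_i=0$, we only get $\dot h_i\ge 0$, and this alone does not exclude tangential exit. I would first try a Nagumo/sub-tangentiality argument applied termwise to each $h_i$. It uses the fact that the constraints hold at every point of $\calS$ near $\vx$, and that the closed-loop vector field is locally Lipschitz, which gives uniqueness of solutions. If that fails, I would exploit continuity of the constraint functions in the state to extend the inequalities to a thin outer neighborhood of $\calS$ inside $B_\varepsilon(\vx)$, with $-\alpha(h_i)$ replaced by a slightly relaxed bound. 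From there I would run the comparison lemma on that neighborhood. A last resort is to treat the finitely many collocation constraints used in practice, as noted after \eqref{eq:backup_cbf_qp}, where the min of finitely many $C^1$ functions makes the exit argument standard.
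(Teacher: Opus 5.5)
Your reduction in the opening paragraph is correct and matches the last step of the paper's proof. Step 3, however, is a genuine gap, and it cannot be closed with the ingredients you use. Knowing only that $\vu_{\textup{nom}}$ satisfies \eqref{eq:backup_cbf_qp_constraint_path}--\eqref{eq:backup_cbf_qp_constraint_terminal} on $B_\varepsilon(\vx)\cap\calS$ does not force the nominal flow to stay in $\calS$ for positive time. Step 2 is circular: while the trajectory is in $\calS$, the $h_i$ are nonnegative by definition, and the step says nothing at the exit instant. All three fallbacks fail at degenerate boundary points:
\begin{itemize}
\item Nagumo needs $F(\vx)\in T_{\calS}(\vx)$, which does not follow from $\nabla h_i\cdot F\ge 0$ for the active $i$ without a constraint qualification.
\item Relaxing to $-\alpha(h_i)-\eta$ on an outer layer only gives $h_i\ge -O(\eta)$, not $h_i\ge 0$.
\item Finitely many $C^1$ terms have the same degeneracy.
\end{itemize}
Concretely, take $\dot{\vx}=\vu$ on $\RealSpace^2$ and $h^{\calC}(x,y)=-x^2\sigma(y)-\rho(y)$, with smooth $\sigma,\rho\ge 0$ vanishing exactly on $y\le 0$ and on $y\le 1$, respectively. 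Then $\calC=\{y\le 0\}\cup\{(0,y): 0<y\le 1\}$. Let $h^{\calS}=-1-y$, $\pi_{\textup{b}}\equiv(0,-1)$ and $T_B=3$, so that $\calS=\calC$, and let $\pi_{\textup{nom}}\equiv(1,0)$. On the whisker, every path term $h^{\calC}\circ\varphi^{\pi_{\textup{b}}}_\theta$ and its gradient vanish, and the terminal term has gradient $(0,-1)$, orthogonal to $\vu_{\textup{nom}}$. So $\vu_{\textup{nom}}$ satisfies every constraint and $(0,\tfrac12)\in\operatorname{int}_{\calS}(\calI_{\textup{BCBF}})$. Yet the nominal flow leaves $\calC$ instantly, so $T_S^\star=0$. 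The local invariance you are after therefore does not follow from the constraints alone.

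What rescues the statement, and what the paper actually uses, is the standing guarantee that $\pi_{\textup{BCBF}}$ renders $\calS$ forward invariant; the example above violates exactly this. With that guarantee, no exit argument is needed. The Backup CBF closed-loop trajectory from $\vx$ stays in $\calS$ for all time, and by continuity it stays in $B_\varepsilon(\vx)$ on some $[0,\overline T]$, hence in $\calI_{\textup{BCBF}}$. Along it $\pi_{\textup{BCBF}}=\pi_{\textup{nom}}$, so it solves the nominal closed loop and, by uniqueness, equals $\varphi^{\pi_{\textup{nom}}}_\tau(\vx)$ on $[0,\overline T]$. Thus the nominal flow remains in $\calS$ on $[0,\overline T]$, and your opening paragraph with any $\overline T_S\in(0,\min\{\overline T,T_H\}]$ completes the proof. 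In short, the invariance you tried to derive termwise must be imported from the Backup CBF invariance guarantee, or from a regularity assumption strong enough to imply it.
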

Figure~\ref{fig:proof} provides a geometric illustration of the local argument used in the proof.

\begin{proof}
Let $\vx_k \in \operatorname{int}_{\calS}(\calI_{\textup{BCBF}})$ be arbitrary. We show that $\vx_k \in \calI_{\textup{GK}}$.

By \eqref{eq:relative_interior_bcbf}, there exists $\varepsilon>0$ such that $B_{\varepsilon}(\vx_k) \cap \calS \subseteq \calI_{\textup{BCBF}}$. Since Backup CBF renders $\calS$ forward invariant and $\vx_k\in\calS$, we have $\varphi_{\tau}^{\pi_{\textup{BCBF}}}(\vx_k)\in\calS$ for all $\tau\ge0$. Continuity of $\varphi_{\tau}^{\pi_{\textup{BCBF}}}(\vx_k)$ with respect to $\tau$ and the fact that $\varphi_{0}^{\pi_{\textup{BCBF}}}(\vx_k)=\vx_k$ imply that there exists $\overline T>0$ such that $\varphi_{\tau}^{\pi_{\textup{BCBF}}}(\vx_k)\in B_\varepsilon(\vx_k)$ for all $\tau\in[0,\overline T]$. Hence
\begin{equation}
\varphi_{\tau}^{\pi_{\textup{BCBF}}}(\vx_k)\in B_{\varepsilon}(\vx_k)\cap\calS \subseteq \calI_{\textup{BCBF}}, \quad \forall \tau\in[0,\overline T].
\label{eq:backupcbf_stays_inactive}
\end{equation}

By definition of $\calI_{\textup{BCBF}}$, Backup CBF is inactive at every state along this trajectory segment. Since $\pi_{\textup{nom}}$ is locally Lipschitz, the nominal closed-loop solution is unique, and thus
\begin{equation} \label{eq:backupcbf_equal_to_nominal}
\varphi_{\tau}^{\pi_{\textup{BCBF}}}(\vx_k) = \varphi_{\tau}^{\pi_{\textup{nom}}}(\vx_k), \qquad \forall \tau\in[0,\overline T].
\end{equation}
Because $\varphi_{\tau}^{\pi_{\textup{BCBF}}}(\vx_k)\in\calS$ for all $\tau\ge0$, \eqref{eq:backupcbf_equal_to_nominal} yields
\begin{equation}
\varphi_{\tau}^{\pi_{\textup{nom}}}(\vx_k)\in \calS \subseteq \calC,
\qquad \forall \tau\in[0,\overline T].
\label{eq:nominal_in_s}
\end{equation}

Choose any $\overline T_S \in (0,\min\{\overline T,T_H\}]$ and define $\vx_S \coloneqq \varphi_{\overline T_S}^{\pi_{\textup{nom}}}(\vx_k)$. By \eqref{eq:nominal_in_s}, we have $\vx_S\in\calS=\calR^{\pi_{\textup{b}}}(T_B;\calS_0)$. Hence, by the definition of the recoverable set, applying $\pi_{\textup{b}}$ from $\vx_S$ keeps the state in $\calC$ over $[0,T_B]$ and reaches $\calS_0$ at time $T_B$. Together with \eqref{eq:nominal_in_s}, this shows that the candidate trajectory that follows $\pi_{\textup{nom}}$ for $\overline T_S$ and then $\pi_{\textup{b}}$ for $T_B$ is valid, i.e., $\operatorname{Valid}(\vx_k;\overline T_S,T_B)=1$. Therefore $\vx_k\in \calV(\overline T_S;T_B)\subseteq \calI_{\textup{GK}}$ by \eqref{eq:gk_inactive_set}. Since $\vx_k$ was arbitrary, it follows that $\operatorname{int}_{\calS}(\calI_{\textup{BCBF}})\subseteq \calI_{\textup{GK}}$.
\end{proof}

\begin{figure}[t]
    \centering
\includegraphics[width=0.9\linewidth]{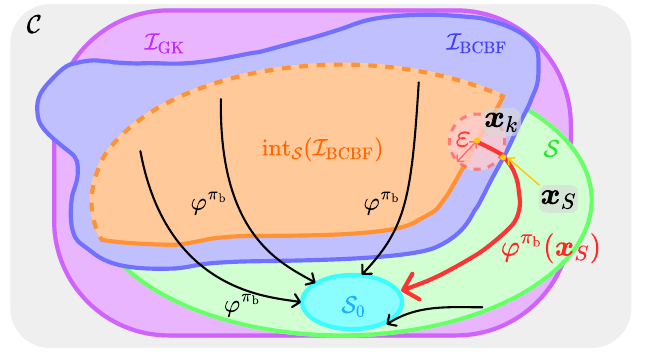}
\caption{Geometric illustration of the local argument in \autoref{thm:backup_cbf_interior}.}
\label{fig:proof}
\end{figure}

\autoref{thm:backup_cbf_interior} identifies the region on which Backup CBF and \gatekeeper{} agree on nominal acceptance. Every interior point of the Backup CBF inactive set admits a nontrivial nominal segment that remains inside $\calS$, and this is sufficient for \gatekeeper{} to certify a valid switching time. Note that boundary points of $\calI_{\textup{BCBF}}$ need not admit such a positive nominal hold time, so a full inclusion $\calI_{\textup{BCBF}} \subseteq \calI_{\textup{GK}}$ cannot be claimed in general.

\subsection{Limitations}

While \gatekeeper{} reduces the conservatism associated with a fixed switching time, it does not eliminate the fundamental limitations of backup-based safety filtering.
\begin{itemize}
    \item \textbf{Dependence on horizon and search grid.} In the implemented algorithm, if $T_H$ is too short or if the grid $\mathcal{T}_H$ is too coarse, \gatekeeper{} can collapse to behavior that is nearly indistinguishable from MPS. Increasing $T_H$ or refining the grid improves permissiveness but increases computational effort.    
    % \item \textbf{Binary switching behavior.} MPS and \gatekeeper{} switch between 
    % $\pi_{\textup{nom}}$ and $\pi_{\textup{b}}$. Unlike Backup CBF, they do not synthesize an 
    % intermediate corrective input. As a result, they can exhibit bang-bang behavior and may require 
    % additional smoothing when actuator-rate limits or chattering are a concern.
    
    \item \textbf{Dependence on the backup policy and model.} All three methods inherit their 
    conservatism from the quality of the backup policy and from the accuracy of the predictive model. A poor or overly aggressive backup policy shrinks the recoverable region and can mask the true advantage of a less intrusive switching rule.
\end{itemize}

%% file: _V.Experiments/results.tex
We report three simulated studies. The first visualizes set-wise behavior in a low-dimensional example, whereas the latter two evaluate task-level performance in dynamic environments. For the reach-avoid and highway scenarios, Table~\ref{tab:merged_results} reports the fraction of time the nominal controller is preserved ($\pi_{\textup{nom}}\%$), the average certified switching time $T_S^\star$, whether the goal is reached, and the average online computation time. For the two dynamic scenarios below, the safety monitor is evaluated every $\Delta t = 0.05~\textup{s}$. Additional implementation details including physical parameters can be found in our public code repository. 

\subsection{Planar Double Integrator}

We first study a planar double-integrator example. The state is $\vx=[x\ y\ v_x\ v_y]^\top\in\mathbb{R}^4$ and the input is $\vu=[a_x\ a_y]^\top\in[-a_{\max},a_{\max}]^2$, where $a_{\max}=0.5~\textup{m/s}^2$. The unsafe set is a disk centered at the origin. We compute the viability kernel using Hamilton-Jacobi (HJ) reachability analysis~\cite{bansal_hamiltonjacobi_2017}, and visualize it on the velocity slice $(v_x,v_y)=(2,0)$. The nominal policy is a PD controller that tracks the reference line $y=2$, whereas the backup policy is a PD controller that drives the system to the recovery line $y=-2$ and remains there.

Fig.~\ref{fig:double_integrator_sets} shows, for each method, the filter-inactive set together with the corresponding certified safe set. The figure makes explicit the role of the backup maneuver. Because Backup CBF and MPS certify safety through an immediate or near-immediate commitment to the backup trajectory, they reject many states from which continuing the nominal policy for a while and switching later would still be safe. \gatekeeper{} recovers these states by searching over the switching time, yielding a visibly larger nominal-acceptance region. The inclusion $\calI_{\textup{MPS}}\subseteq \calI_{\textup{GK}}$ from \autoref{thm:mps_special_case} is also clearly reflected in this example.

\begin{figure*}[t]
    \centering
    \includegraphics[width=0.9\textwidth]{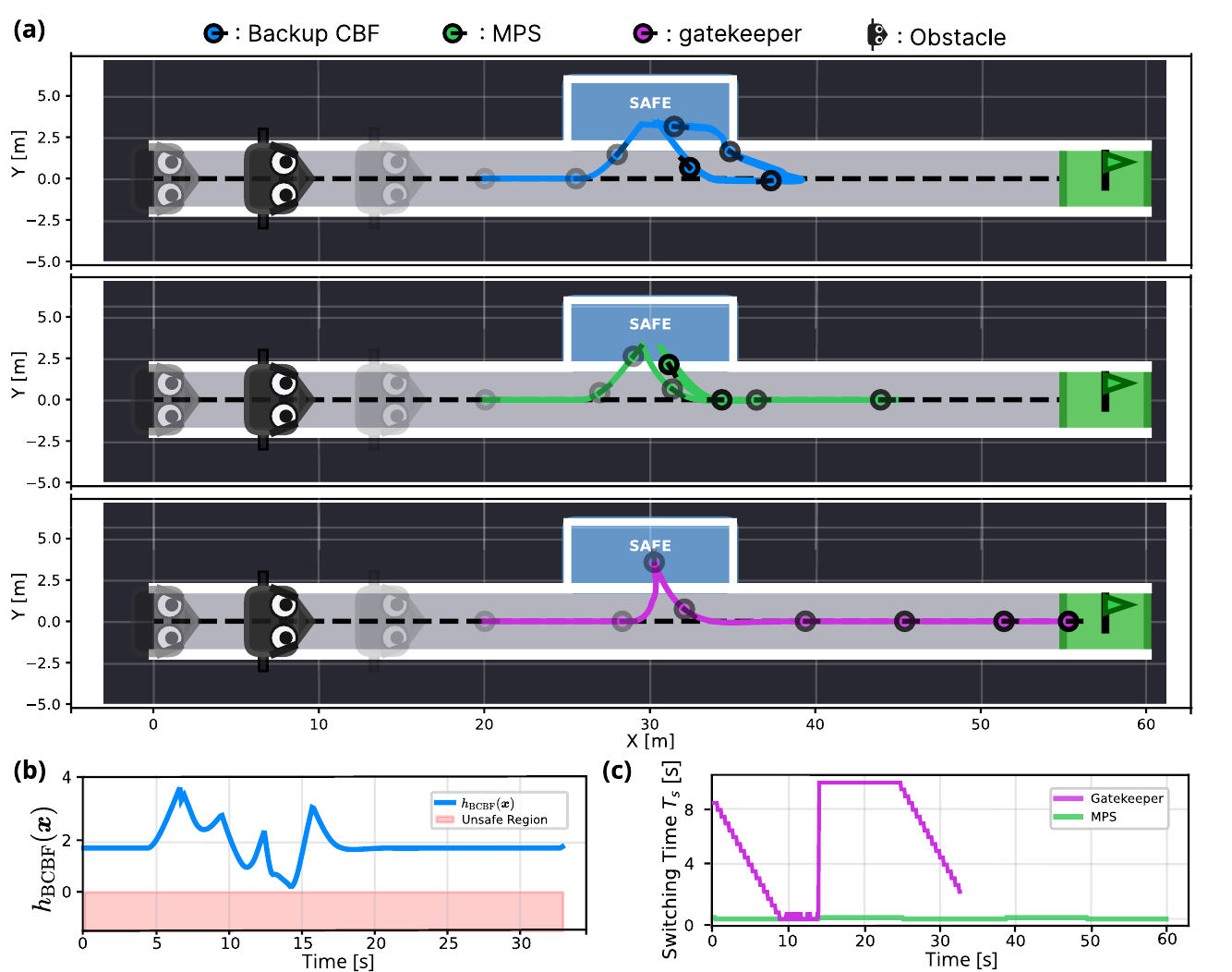}
    \caption{Reach-avoid scenario with a dynamic obstacle. (a) Robot trajectories generated by Backup CBF, MPS, and \gatekeeper{}. (b) Backup CBF value as a function of time. (c) Comparison of the certified switching time $T_S^\star$ for MPS and \gatekeeper{}.}
    \label{fig:evade_overview}
\end{figure*}

\subsection{Reach-Avoid with a Dynamic Obstacle}

We next consider a reach-avoid task in a narrow hallway with a side safety pocket. The ego robot again follows the planar double-integrator dynamics, now with radius $0.5~\textup{m}$, speed bound $\sqrt{v_x^2+v_y^2}\le 1.5~\textup{m/s}$, and acceleration bound $\|\vu\|_2\le 2.0~\textup{m/s}^2$. A moving obstacle travels through the hallway at $3.0~\textup{m/s}$, faster than the ego robot, covers the full corridor width, and respawns from the left after reaching the far end. The safety pocket and the goal region define the terminal controlled invariant set~$\calS_0$. \gatekeeper{} searches over a nominal horizon $T_H=10.0~\textup{s}$, and the backup horizon is set to $T_B=12.0~\textup{s}$ for all methods.

The nominal policy is a PD controller that drives the robot toward the goal along the hallway centerline. The backup policy is a PD controller that drives to the safety pocket and remains there; if the robot is already in the goal region, it instead keeps the robot near the goal. The representative trajectories in Fig.~\ref{fig:evade_overview} illustrate the conservatism discussed in \autoref{subsec:evaluation_on_backup}. Backup CBF and MPS repeatedly retreat to the pocket whenever the obstacle approaches, which preserves safety but prevents progress to the goal. In contrast, \gatekeeper{} hides in the pocket for the first pass, then leaves once a later switch to backup remains valid, and reaches the goal. The same trend appears in Table~\ref{tab:merged_results}: \gatekeeper{} preserves the nominal controller on $87.5\%$ of the steps and reaches the goal, whereas MPS and Backup CBF preserve the nominal controller on only $55.3\%$ and $28.8\%$, respectively, and fail to complete the task. The parallelized implementation of \gatekeeper{} yields the same closed-loop behavior with significantly lower average computation time than the original \gatekeeper{}.

\begin{table}[t]
\centering
\caption{Quantitative evaluation in the reach-avoid and highway overtake scenarios, averaged over five trials with randomly sampled initial robot conditions.}
\label{tab:merged_results}
\footnotesize
\setlength{\tabcolsep}{1.5pt}
\renewcommand{\arraystretch}{1.05}

\begin{tabular}{@{}
>{\centering\arraybackslash}m{1.45cm}
>{\centering\arraybackslash}m{1.85cm}
>{\centering\arraybackslash}m{0.85cm}
>{\centering\arraybackslash}m{1.00cm}
>{\centering\arraybackslash}m{1.05cm}
>{\centering\arraybackslash}m{1.25cm}
@{}}
\toprule
\makecell[c]{\textbf{Scenario}} &
\makecell[c]{\textbf{Method}} &
\makecell[c]{$\pi_{\textup{nom}}$ \\ \%} &
\makecell[c]{\textbf{Avg.} \\ \textbf{$T_s^*$ [s]}} &
\makecell[c]{\textbf{Reached} \\ \textbf{Goal}} &
\makecell[c]{\textbf{Avg. Time} \\ \textbf{[ms]}} \\
\midrule
\multirow{4}{*}{\makecell[c]{Reach-Avoid}} & $\pi_{\textup{BCBF}}$ & 28.8 & N/A & \xmark & 10.33 \\
& $\pi_{\textup{MPS}}$ & 55.3 & 0.06 & \xmark & 1.31 \\
& $\pi_{\textup{GK}}$ & \textbf{87.5} & \textbf{6.00} & \checkmark & 10.86 \\
& Parallel $\pi_{\textup{GK}}$ & \textbf{87.5} & \textbf{6.00} & \checkmark & \textbf{1.30} \\
\midrule
\multirow{4}{*}{\makecell[c]{Highway\\Overtake}} & $\pi_{\textup{BCBF}}$ & 60.8 & N/A & \checkmark & 27.90 \\
& $\pi_{\textup{MPS}}$ & 67.5 & 0.03 & \checkmark & \textbf{2.64} \\
& $\pi_{\textup{GK}}$ & \textbf{100} & \textbf{6.00} & \checkmark & 3.63 \\
& Parallel $\pi_{\textup{GK}}$ & \textbf{100} & \textbf{6.00} & \checkmark & 2.97 \\
\bottomrule
\end{tabular}
\end{table}

\subsection{Highway Overtake with a Dynamic Bicycle Model}

Our final study uses a realistic highway-driving model. The ego vehicle is described by an 8-state, 2-input dynamic bicycle model with Fiala tire forces~\cite{dallas_control_2025},
\begin{equation}
\vx=[p_x\ p_y\ \psi\ r\ \beta\ V\ \delta\ \tau]^\top,\qquad
\vu=[\dot{\delta}\ \dot{\tau}]^\top,
\label{eq:sim_bicycle}
\end{equation}
where $(p_x,p_y)$ is the global position, $\psi$ is yaw, $r$ is yaw rate, $\beta$ is sideslip, $V$ is speed, $\delta$ is steering angle, and $\tau$ is rear-wheel torque. The kinematics are $\dot{p}_x=V\cos(\psi+\beta)$, $\dot{p}_y=V\sin(\psi+\beta)$, and $\dot{\psi}=r$, while the remaining dynamics are governed by nonlinear tire forces and friction-limited actuation. We impose $V\le 20~\textup{m/s}$, $|\delta|\le 20^\circ$, $|\dot{\delta}|\le 25^\circ/\textup{s}$, $|\tau|\le 4000~\textup{N\,m}$, and $|\dot{\tau}|\le 8000~\textup{N\,m/s}$. The experiments are conducted on a straight $300~\textup{m}$ road with five lanes of width $4~\textup{m}$. $T_H$ and $T_B$ are set to $6~\textup{s}$ and $3~\textup{s}$, respectively.

\begin{figure*}[t]
    \centering
\includegraphics[width=0.9\textwidth]{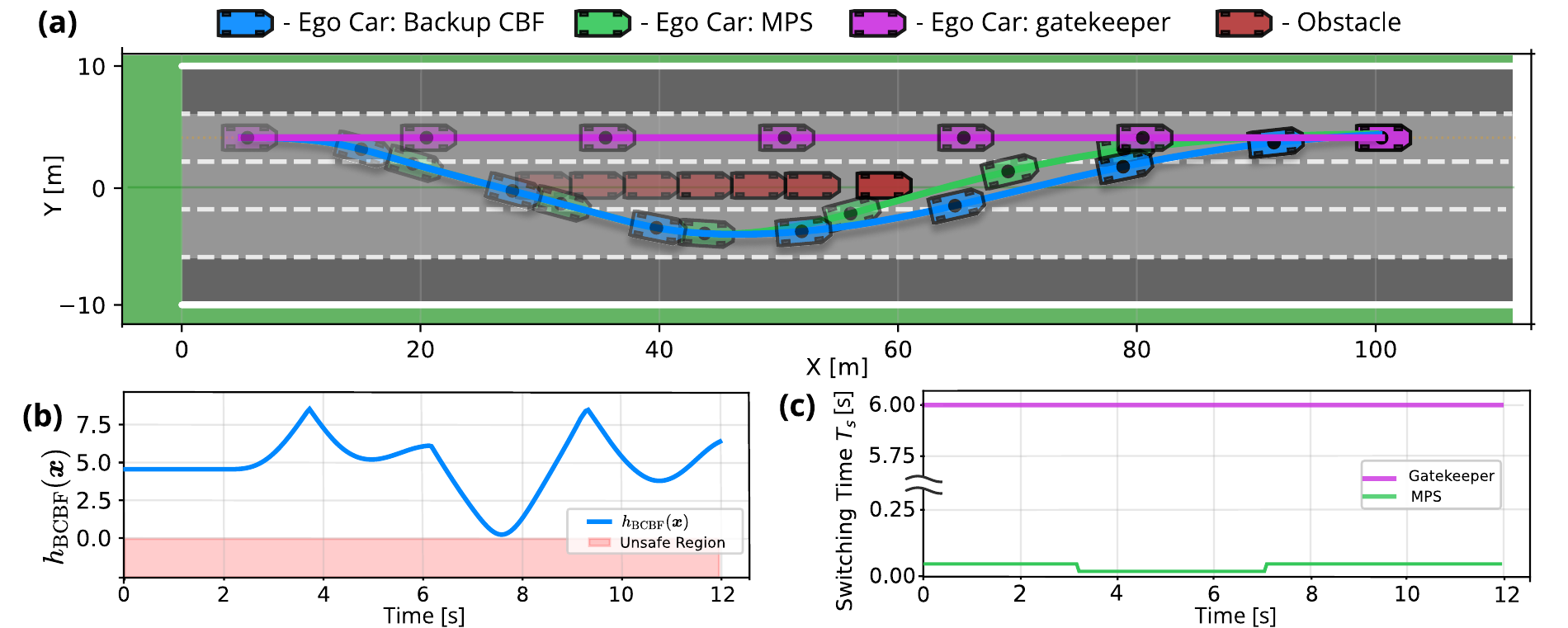}
\caption{Highway overtake scenario. (a) Vehicle trajectories generated by Backup CBF, MPS, and \gatekeeper{}. (b) Backup CBF value as a function of time. (c) Comparison of the certified switching time $T_S^\star$ for MPS and \gatekeeper{}.}
\label{fig:highway_overview}
\end{figure*}

The nominal controller is a Model Predictive Contouring Controller (MPCC)~\cite{lam_model_2010} that tracks the centerline of the current lane at $10~\textup{m/s}$. The backup policy is a cascaded PD lane-change controller to a designated backup lane. In the representative case shown in Fig.~\ref{fig:highway_overview}, the nominal lane is clear, the adjacent lane contains a moving obstacle, and the backup lane farther to the side is empty. Nevertheless, Backup CBF and MPS trigger the backup lane change unnecessarily, because their online safety certificates are tied to the backup maneuver itself and evaluate the nominal trajectory only myopically. \gatekeeper{}, by certifying a longer nominal segment before switching, continues straight and avoids this unnecessary intervention. Averaged over 5 randomized trials with the same obstacle configuration, Table~\ref{tab:merged_results} shows $100\%$ nominal tracking for \gatekeeper{} versus $67.5\%$ for MPS and $60.8\%$ for Backup CBF.

%% file: _VI.Conclusion/conclusion.tex
This review paper revisited Backup CBF, MPS, and \gatekeeper{} under a common safety-filter abstraction. Our unified view provides a new perspective for comparing backup-based safety filters through their filter-inactive sets. It also highlights an important and still underappreciated issue shared by backup-based safety filters, which we refer to as \emph{safety evaluation on backup}: the nominal policy is evaluated myopically through the backup maneuver. Finally, we showed that \gatekeeper{} introduces an additional degree of freedom through the switching time, which can mitigate this issue and substantially increase the duration for which the nominal policy can be executed safely.